\newtheorem{theorem}{Theorem}
\newtheorem{remark}{Remark}
\newtheorem{definition}{Definition}
\begin{document}

\title{Multi-Scale Harmonic Encoding for Feature-Wise Graph Message Passing}

\author{
    Longlong Li, Mengyang Zhao, Guanghui Wang and Cunquan Qu\thanks{*Corresponding author.}
    
    \thanks{Longlong Li is with the School of Mathematics and the Data Science Institute, Shandong University, Jinan 250100, China (e-mail: longlee@mail.sdu.edu.cn).}

    \thanks{Mengyang Zhao is with the School of Mathematics and the Data Science Institute, Shandong University, Jinan 250100, China (email: mengyangzhao@mail.sdu.edu.cn)}
    
    \thanks{Guanghui Wang is with the School of Mathematics, Shandong University, Jinan 250100, China (e-mail: ghwang@sdu.edu.cn).}
    
    \thanks{*Cunquan Qu is the corresponding author and is with the Data Science Institute, Shandong University, Jinan 250100, China (e-mail: cqqu@sdu.edu.cn).}
    
}

\maketitle

\begin{abstract}
Most Graph Neural Networks (GNNs) propagate messages by treating node embeddings as holistic feature vectors, implicitly assuming uniform relevance across feature dimensions. This limits their ability to selectively transmit informative components, especially when graph structures exhibit distinct frequency characteristics. We propose MSH-GNN (Multi-Scale Harmonic Graph Neural Network), a frequency-aware message passing framework that performs feature-wise adaptive propagation. Each node projects incoming messages onto node-conditioned feature subspaces derived from its own representation, enabling selective extraction of frequency-relevant components. Learnable multi-scale harmonic modulations further allow the model to capture both smooth and oscillatory structural patterns. A frequency-aware attention pooling mechanism is introduced for graph-level readout. We show that MSH-GNN admits an interpretation as a learnable Fourier-feature approximation of kernelized message functions and matches the expressive power of the 1-Weisfeiler--Lehman (1-WL) test. Extensive experiments on node- and graph-level benchmarks demonstrate consistent improvements over state-of-the-art methods, particularly in joint structure--frequency analysis tasks.
\end{abstract}

\begin{IEEEkeywords}
Graph Neural Networks, Message Passing, Frequency-Aware Learning,
Harmonic Representation, Feature-Wise Propagation
\end{IEEEkeywords}

\section{Introduction}

Graph Neural Networks (GNNs)~\cite{kipf2016semi, hamilton2017inductive, xu2018powerful,Zhang2024} have become a dominant paradigm for learning on graph-structured data, achieving strong performance in a wide range of domains, including molecular property prediction~\cite{gilmer2017neural}, protein modeling~\cite{gligorijevic2021structure}, and social network analysis~\cite{guo2020deep}. Most modern GNN architectures are built upon message passing, where each node iteratively updates its representation by aggregating information from its local neighborhood~\cite{wu2020comprehensive, Qi2025}. Despite their empirical success, existing message passing schemes predominantly focus on where information is aggregated from, while paying comparatively less attention to which components of the incoming features should be propagated. In many real-world graphs, node embeddings encode heterogeneous signals, such as structural, chemical, or functional attributes, whose relevance varies across nodes and local contexts. This mismatch between heterogeneous feature semantics and uniform propagation rules raises a fundamental question: can GNNs selectively propagate feature components in a context-dependent manner, rather than treating node embeddings as indivisible wholes?

Most conventional GNNs apply shared linear transformations to entire neighbor embeddings before aggregation, implicitly assuming uniform importance across all feature dimensions. This holistic message passing paradigm does not explicitly differentiate which feature components or subspaces are most informative for a specific target node. For instance, in molecular graphs, different feature dimensions may correspond to geometric configurations, chemical environments, or interaction patterns, whose relevance strongly depends on local structural context. Attention-based models, such as GAT~\cite{velivckovic2017graph}, GATv2~\cite{brody2021attentive}, GAAN~\cite{zhang2018gaan}, and their extensions~\cite{rong2019dropedge, ying2021transformers, wu2021representing}, introduce adaptivity by assigning scalar attention weights to neighbors. However, these methods primarily modulate neighbor contributions at the node level; once a neighbor is selected, its full embedding is typically aggregated without further feature-wise differentiation. As a result, they are not explicitly designed to capture variations in the internal composition of neighbor features, which can be important for modeling subspace-sensitive or frequency-dependent patterns.

Recent efforts have explored alternatives to holistic aggregation by incorporating channel-wise modulation or spectral representations, such as GNN-FiLM~\cite{brockschmidt2020gnn}, GraphGPS~\cite{rampavsek2022recipe}, and SpecFormer~\cite{bo2023specformer}. These approaches enhance model flexibility through global conditioning or fixed spectral encodings. However, they primarily focus on global or layer-wise modulation, and do not explicitly address localized, node-dependent feature projection during message construction. In particular, how a target node should reinterpret incoming features based on its own representation and local structural context remains an open modeling question. This limitation is illustrated in Figure~\ref{fig_1:gat-failure}. Although graphs $G_1$ and $G_2$ share similar average neighbor features, they differ in how these features are composed across feature dimensions. GAT assigns scalar attentions to entire neighbor embeddings and aggregates them uniformly, making it challenging to distinguish such differences in the feature-wise composition of neighbor representations. In contrast, a feature-wise message passing mechanism can capture these discrepancies by selectively projecting neighbor features into node-specific feature subspaces.
\begin{figure}[htbp]
    \centering
    \includegraphics[width=0.6\linewidth]{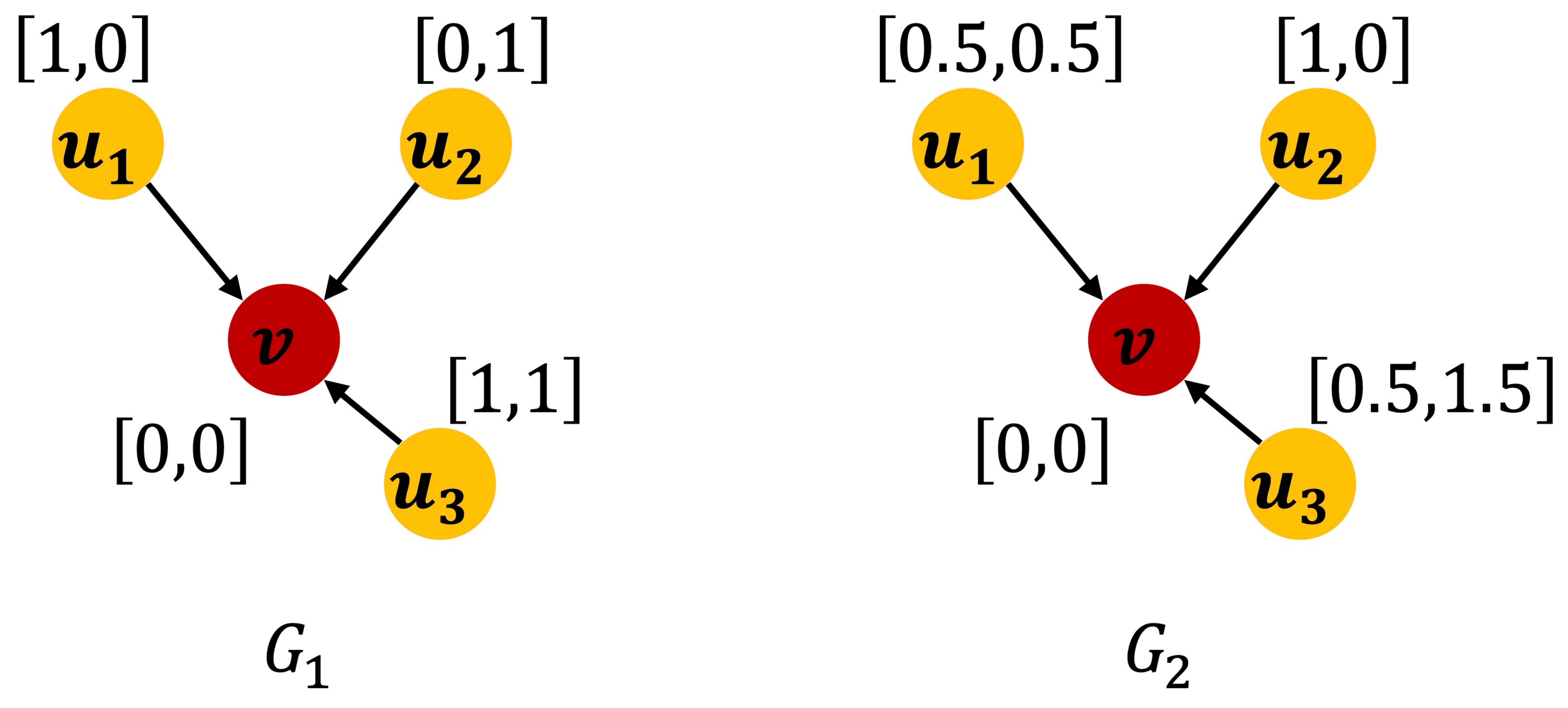}
    \caption{
Comparison of neighbor feature aggregation behaviors.
Although graphs $G_1$ and $G_2$ have similar average neighbor features, they differ in how these features are composed across feature dimensions.
Attention-based aggregation assigns scalar weights to entire neighbor embeddings, which can obscure such differences in feature composition.
By contrast, MSH-GNN applies node-conditioned feature-wise projections, enabling these compositional variations to be preserved during aggregation.
See Appendix~\ref{appendix:gat-gap} for further analysis.
}
    \label{fig_1:gat-failure}
\end{figure}

To address these challenges, we propose MSH-GNN (Multi-Scale Harmonic Graph Neural Network), a novel architecture (Figure~\ref{fig_2:MSH-GNN}) that performs feature-wise adaptive message passing. Instead of aggregating neighbor embeddings as indivisible vectors, MSH-GNN allows each target node to project incoming features onto node-conditioned harmonic projection components derived from its own representation. These projections are further modulated through multi-scale sinusoidal functions, enabling the model to capture both smooth and oscillatory structural patterns across different frequency bands. A frequency-aware attention pooling mechanism aggregates node representations into graph-level embeddings by emphasizing harmonically informative responses. Together, these components form a unified framework that bridges spatial message passing and harmonic (frequency-aware) inductive biases.
\begin{figure*}[htbp]
    \centering
    \includegraphics[width=0.9\linewidth]{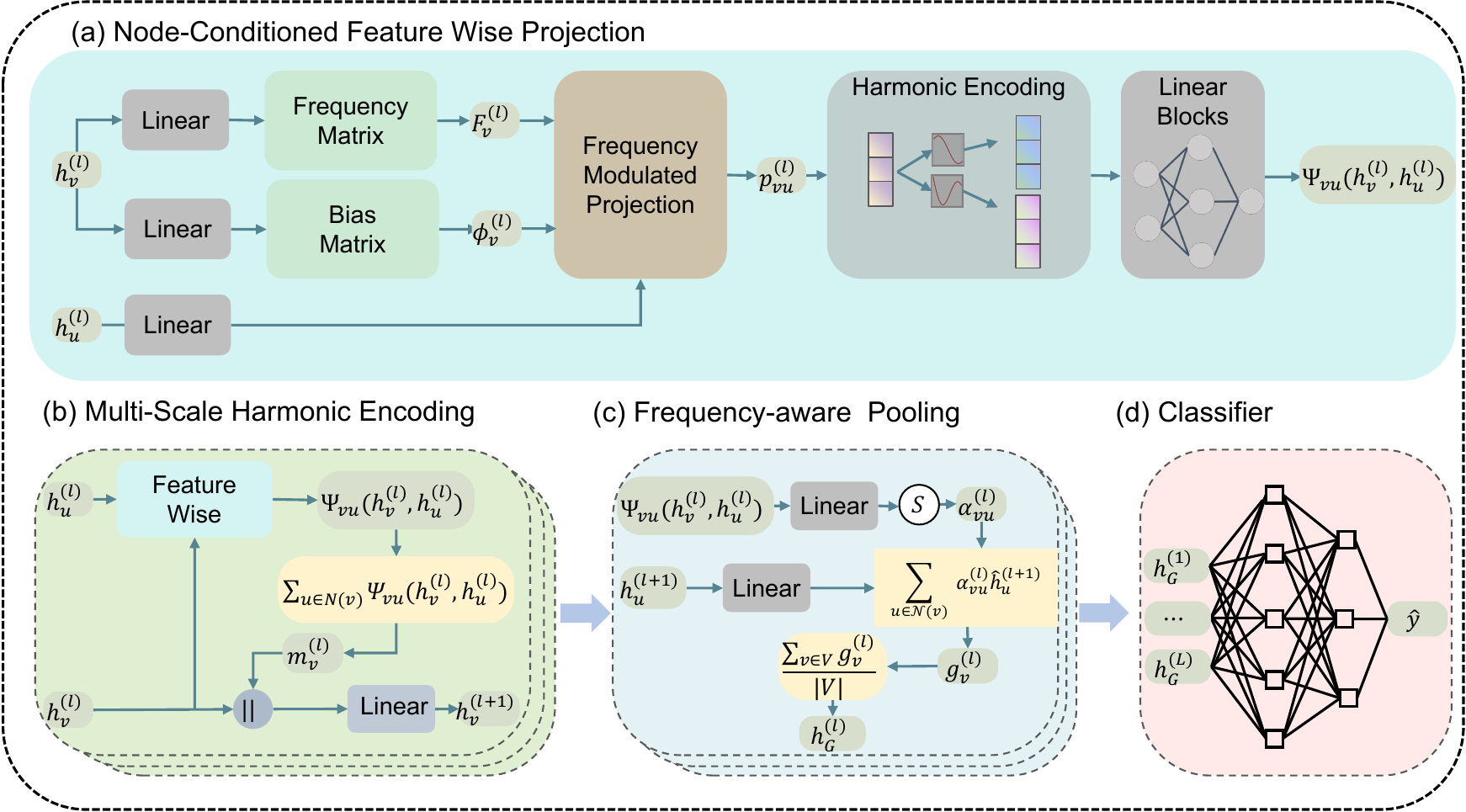}
    \caption{
Overview of the MSH-GNN architecture.
Incoming neighbor features are first projected into node-conditioned projection components and modulated across multiple harmonic scales to guide feature-wise message passing.
The resulting node representations are subsequently pooled in a frequency-aware manner and used for downstream prediction.
}
    \label{fig_2:MSH-GNN}
\end{figure*}
Our contributions are summarized as follows:
\begin{itemize}
    \item We identify a fundamental limitation of holistic message passing in existing GNNs and propose a feature-wise adaptive propagation paradigm that enables context-dependent feature selection.
    \item We develop a frequency-aware message passing framework based on node-specific harmonic projections with multi-scale sinusoidal modulation, enabling feature-sensitive and scale-adaptive aggregation.
    \item Extensive experiments on molecular, protein, and social network benchmarks demonstrate that MSH-GNN consistently outperforms state-of-the-art methods, particularly in joint structure--frequency discrimination tasks.
\end{itemize}

\section{Related Work}

\subsection{Holistic and Feature-Wise Message Passing in GNNs}

Message passing is the fundamental operation underlying most GNN architectures.
Early models such as GCN~\cite{kipf2016semi}, GraphSAGE~\cite{hamilton2017inductive}, and GIN~\cite{xu2018powerful} aggregate and transform neighbor embeddings in a holistic manner, treating node representations as indivisible feature vectors.
While effective in many settings, this paradigm applies uniform transformations across feature dimensions and does not explicitly model which components of neighbor features are most relevant to a given target node.

To improve adaptivity, several works have explored feature-wise or channel-wise modulation mechanisms. GNN-FiLM~\cite{brockschmidt2020gnn} introduces affine transformations conditioned on global or node-level information, enabling limited feature modulation but remaining largely detached from local structural context. Other models incorporate fixed or global spectral encodings, such as SpecFormer~\cite{bo2023specformer}, which uses Laplacian eigenvectors to guide downstream prediction. However, these approaches primarily operate at the layer or readout level, and do not explicitly perform node-conditioned feature selection during message construction.

Geometric, directional, and anisotropic GNNs~\cite{beaini2021directional,bodnar2021weisfeiler, wang2021weisfeiler} encode spatial or geometric orientations (e.g., edge directions in coordinate space) through predefined equivariant bases. While effective in geometric settings, these representations are typically tied to explicit geometry and are often fixed rather than target-conditioned in the learned
feature space.

MSH-GNN performs node-conditioned projections and harmonic modulation purely in the learned feature space, enabling localized feature-wise selection without assuming any geometric coordinate system.

\subsection{Attention-Based and Tokenized Graph Models}

Attention mechanisms have been widely adopted to enhance the expressiveness of message passing. Models such as GAT~\cite{velivckovic2017graph}, GATv2~\cite{brody2021attentive}, Graphormer~\cite{ying2021transformers} and GRARF~\cite{Ma2023} assign adaptive weights to neighbors, allowing nodes to emphasize more informative connections. These methods primarily focus on neighbor-level importance, while still aggregating entire feature vectors without explicit feature-wise differentiation.

Transformer-based graph models~\cite{shi2020masked, chen2022nagphormer, shirzad2023exphormer}, including TokenGT~\cite{kim2022pure} and GraphGPS~\cite{rampavsek2022recipe}, treat nodes or structural anchors as tokens and apply global attention mechanisms. These approaches are effective in capturing long-range dependencies, but typically rely on token-level interactions and do not explicitly shape local message content in a feature-wise or frequency-aware manner during propagation.

MSH-GNN is complementary to these models in that it focuses on a different axis of adaptivity: rather than selecting which neighbors or tokens to attend to, it selectively filters \emph{which feature components} are propagated through node-conditioned, frequency-aware projections.

\subsection{Spectral Models, Kernel Methods, and Graph Pooling}

Spectral GNNs design filters in the frequency domain using graph Laplacian eigenbases~\cite{jing2018waveletnet, ruiz2022graph}.
These methods capture global structural patterns through fixed or learned spectral representations, but typically operate on static bases and lack node-dependent adaptivity during message passing.

\noindent\textbf{Terminology.}
In this paper, ``frequency-aware'' refers to multi-scale harmonic modulation of
\emph{node-conditioned feature projections} (Fourier-feature style), rather than
spectral filtering explicitly parameterized on the Laplacian eigenbasis.

Kernel-based approaches, including graph kernels~\cite{kriege2020survey} and neural tangent kernels~\cite{du2019graph}, offer theoretical insights into graph expressiveness via fixed similarity measures or random feature approximations~\cite{rahimi2007random}.
While expressive, such methods are generally non-parametric or detached from end-to-end message passing architectures.

Graph pooling and readout mechanisms aim to aggregate node representations into graph-level embeddings.
Standard pooling methods, such as global mean pooling or attention-based pooling, summarize node features primarily based on permutation-invariant aggregation or learned importance scores, without explicitly accounting for spectral or frequency-specific responses~\cite{baek2021accurate}.
More advanced pooling strategies introduce hierarchical compression~\cite{wu2022structural}, topological criteria~\cite{chen2023topological}, or geometric partitioning~\cite{liu2025graph, zhang2025rhomboid} to capture higher-level structural abstractions.
However, these approaches generally focus on structural summarization or node selection, and remain largely decoupled from the feature-frequency characteristics induced during message passing.

MSH-GNN employs a frequency-aware pooling strategy that aligns with its feature-wise message passing mechanism, emphasizing nodes that exhibit informative harmonic responses across multiple scales.

\section{Methods}
\label{sec:methods}

\begin{definition}[Feature-Wise Projection Message Passing]
\label{def:fwmp}
A message passing scheme is said to perform \emph{feature-wise projection} if, for each target node $v$, incoming neighbor features $\mathbf{h}_u$ are first projected onto a set of node-conditioned feature subspaces generated from $\mathbf{h}_v$, and message construction operates on these projected components rather than on the original feature vectors as indivisible wholes. Formally, the message from node $u$ to node $v$ takes the form
\[
\Psi_{vu} = \Psi\big(\mathbf{F}_v \mathbf{h}_u + \boldsymbol{\phi}_v\big),
\]
where $\mathbf{F}_v$ and $\boldsymbol{\phi}_v$ are learnable functions of the target node representation $\mathbf{h}_v$.
\end{definition}

\begin{remark}
Throughout this paper, \emph{feature-wise} and \emph{projection-based} refer to operations in the learned feature space rather than geometric or spatial orientations.
\end{remark}

\subsection{MSH-GNN Architecture}
We propose MSH-GNN, a graph neural network that performs feature-wise adaptive message passing with localized frequency modulation. Unlike conventional GNNs that treat neighbor embeddings as holistic feature vectors, MSH-GNN enables each target node to extract task-relevant components from neighbors via node-conditioned feature projections. These projections are further modulated by multi-scale harmonic functions, allowing the model to capture both smooth and oscillatory structural patterns. Figure~\ref{fig_2:MSH-GNN} illustrates the overall architecture, which consists of: (1) node-conditioned feature-wise projection, (2) multi-scale harmonic modulation for frequency-aware message construction, and (3) frequency-aware pooling for graph-level readout.

\subsection{Node-conditioned Feature-wise Projection}
Let $G=(V,E)$ be a graph with node representations $\mathbf{h}_v^{(l)}\in\mathbb{R}^d$ at layer $l$. MSH-GNN equips each target node $v$ with a node-conditioned projection matrix and a \emph{phase shift} vector, both generated from $\mathbf{h}_v^{(l)}$:
\begin{equation}
\begin{split}
\mathbf{F}_v^{(l)} &= \mathrm{reshape}\!\left(\mathbf{W}_f \mathbf{h}_v^{(l)} + \mathbf{b}_f\right)\in \mathbb{R}^{F\times d},\\
\boldsymbol{\phi}_v^{(l)} &= \mathbf{W}_\phi \mathbf{h}_v^{(l)} + \mathbf{b}_\phi \in \mathbb{R}^{F}.
\end{split}
\end{equation}
Here, $\mathbf{W}_f\in\mathbb{R}^{Fd\times d}$ and $\mathbf{b}_f\in\mathbb{R}^{Fd}$ produce $F$ node-conditioned projection vectors, which are reshaped into $\mathbf{F}_v^{(l)}$; meanwhile $\mathbf{W}_\phi\in\mathbb{R}^{F\times d}$ and $\mathbf{b}_\phi\in\mathbb{R}^{F}$ generate the corresponding
phase shifts $\boldsymbol{\phi}_v^{(l)}$.

For each directed edge $(u\rightarrow v)$, we project the source features $\mathbf{h}_u^{(l)}$ into the node-conditioned feature subspace of $v$:
\begin{equation}
\mathbf{p}_{vu}^{(l)} = \mathbf{F}_v^{(l)} \mathbf{h}_u^{(l)} + \boldsymbol{\phi}_v^{(l)} \in \mathbb{R}^{F}.
\end{equation}
Unlike shared linear transformations, $\mathbf{F}_v^{(l)}$ and $\boldsymbol{\phi}_v^{(l)}$ are determined by the target node $v$, so the resulting projected representation $\mathbf{p}_{vu}^{(l)}$ is explicitly receiver-conditioned. This provides a feature-wise mechanism for $v$ to reinterpret and selectively emphasize components of incoming neighbor information before harmonic modulation and aggregation.

\subsection{Harmonic Encoding Module}

To nonlinearly enrich the node-conditioned feature projection $\mathbf{p}_{vu}^{(l)} \in \mathbb{R}^F$, we apply sinusoidal encoding over a set of predefined frequencies. Specifically, we define a set of $K$ harmonic frequencies $\{\omega_k\}_{k=1}^K$
and compute as:
\begin{equation}
\psi_k(\mathbf{p}_{vu}) =
\left[\sin(\omega_k \mathbf{p}_{vu}) \,\|\, \cos(\omega_k \mathbf{p}_{vu})\right],
\quad \omega_k \in \mathbb{R}.
\end{equation}

In this work, we adopt an exponential frequency schedule $\omega_k = 2^{k-1}$ with $k=1,2,3$, corresponding to $\omega=\{1,2,4\}$. This choice captures both low- and high-frequency components and empirically balances expressiveness and stability, similar in spirit to the hierarchical frequency scaling used in positional encodings for Transformers~\cite{vaswani2017attention}. Other frequency schedules are possible and left for future exploration.

This harmonic encoding scheme is reminiscent of random Fourier features~\cite{rahimi2007random}, which approximate shift-invariant kernels via trigonometric mappings. However, unlike randomly sampled bases, the encoding in MSH-GNN is applied to node-conditioned projections $\mathbf{p}_{vu}$, resulting in localized and
structure-aware feature representations.

The concatenation of sine and cosine functions across multiple frequencies provides a discriminative embedding that is designed to preserve variations across different feature-space projections. As a result, differences in $\mathbf{p}_{vu}$ across different projection components can be effectively reflected after harmonic modulation.

The final modulated edge message is computed as:
\begin{equation}
\Psi_{vu}(\mathbf{h}_v^{(l)}, \mathbf{h}_u^{(l)}) =
\mathbf{W}_o \left[
\psi_1(\mathbf{p}_{vu}^{(l)}) \,\|\, 
\psi_2(\mathbf{p}_{vu}^{(l)}) \,\|\, 
\psi_3(\mathbf{p}_{vu}^{(l)})
\right],
\end{equation}
where $\mathbf{W}_o$ is a learnable projection applied after the harmonic expansion.

\paragraph*{Kernel Interpretation}
The harmonic encoding can be interpreted as a Fourier-feature approximation of kernelized message functions. By applying multi-frequency sinusoidal mappings to node-conditioned feature projections, MSH-GNN induces target-dependent similarity functions during message passing, which resemble shift-invariant kernels defined in the projected feature space. A formal analysis is provided in Section~\ref{sec:theory}.

\subsection{Frequency-Aware Aggregation Layer}
Node $v$ aggregates frequency-modulated projection-based messages as:
\begin{equation}
    \mathbf{m}_v^{(l)} = \sum_{u \in \mathcal{N}(v)} \Psi_{vu}(\mathbf{h}_v^{(l)}, \mathbf{h}_u^{(l)}), 
\quad 
\mathbf{h}_v^{(l+1)} = \mathrm{MLP}(\mathbf{h}_v^{(l)} + \mathbf{m}_v^{(l)}).
\end{equation}

\subsection{Frequency-Aware Pooling}

To generate graph-level representations, we apply a frequency-aware attention mechanism to modulated edge messages. For each node $v$, we compute edge-level attention weights and intermediate pooled features:
\begin{equation}
\begin{split}
    \alpha_{vu}^{(l)} = \sigma(\mathbf{W}_1 \Psi_{vu}(\mathbf{h}_v^{(l)}, \mathbf{h}_u^{(l)})), \\
    \mathbf{g}_v^{(l)} = \sum_{u \in \mathcal{N}(v)} 
\alpha_{vu}^{(l)} \cdot \mathbf{W}_2 \Psi_{vu}(\mathbf{h}_v^{(l)}, \mathbf{h}_u^{(l)}).
\end{split}
\end{equation}
where $\mathbf{W}_1, \mathbf{W}_2$ are learnable parameters, and $\sigma$ denotes a non-linear activation function (e.g., $\mathrm{softmax}$ or $\mathrm{sigmoid}$) applied across edges.

The node-wise outputs are then averaged to form the layer-specific graph representation:
\begin{equation}
\mathbf{h}_G^{(l)} = \frac{1}{|V|} \sum_{v \in V} \mathbf{g}_v^{(l)}.
\end{equation}

After $L$ message passing layers, we perform a weighted combination of multi-layer representations to obtain the final graph embedding:
\begin{equation}
\mathbf{h}_G = \sum_{l=1}^L w_l \cdot \mathbf{h}_G^{(l)},
\end{equation}
where the weights $w_l$ are learnable scalars trained to adaptively fuse information across different depths.

\section{Theoretical Justification}
\label{sec:theory}

Unlike fixed Fourier positional encodings commonly adopted in Transformer-based graph models, the proposed harmonic mapping in MSH-GNN operates on \emph{node-conditioned feature projections}, thereby inducing an adaptive kernel structure that varies across target nodes and edges. This design goes beyond global or static encodings by enabling localized, context-dependent similarity modeling during message passing.

We analyze the expressive power of MSH-GNN from two complementary perspectives:  
\begin{enumerate}
    \item its interpretation as a kernelized message-passing mechanism via deterministic harmonic features; 
    \item its ability to distinguish non-isomorphic graphs under the 1-Weisfeiler--Lehman (1-WL) test.
\end{enumerate}

Formal derivations and proofs supporting these results are provided in Appendix~\ref{appendix:kernel-approximation}.

\paragraph{Kernel-based Expressiveness}

The harmonic encoding module in MSH-GNN combines node-adaptive linear projections with
multi-scale sinusoidal mappings, which can be interpreted as inducing
target-dependent similarity functions during message passing.
Specifically, each message takes the form
\begin{equation}
    \Psi_{vu}(\mathbf{h}_v, \mathbf{h}_u)
    = \mathbf{W}_o \cdot \psi\big(\mathbf{F}_v \mathbf{h}_u + \boldsymbol{\phi}_v\big),
\end{equation}
where $\mathbf{F}_v \in \mathbb{R}^{F \times d}$ and $\boldsymbol{\phi}_v \in \mathbb{R}^F$
are node-specific projections derived from $\mathbf{h}_v$, and $\psi$ denotes a
multi-frequency harmonic mapping.

From a functional perspective, the inner product between harmonic encodings
\begin{equation}
    \psi(\mathbf{p}_{vu})^\top \psi(\mathbf{p}_{vu'})
    \quad \text{with} \quad
    \mathbf{p}_{vu} = \mathbf{F}_v \mathbf{h}_u + \boldsymbol{\phi}_v,
\end{equation}
resembles structured Fourier feature mappings that approximate shift-invariant kernels \emph{on node-conditioned projected features}~\cite{rahimi2007random}. Under sufficiently rich frequency choices, such mappings are known to approximate a broad class of continuous functions over projected features~\cite{rahimi2007random, barron1993universal}.

We next analyze the combinatorial expressiveness of MSH-GNN from the perspective
of the 1-Weisfeiler--Lehman (1-WL) test, which characterizes the discriminative
power of a broad class of message passing GNNs.

\paragraph{Combinatorial Expressiveness via 1-WL}

To analyze the combinatorial expressiveness of MSH-GNN, we study its ability to distinguish non-isomorphic graphs under the 1-Weisfeiler–Lehman (1-WL) test. Unlike conventional GNNs that aggregate neighbor embeddings holistically, MSH-GNN performs feature-wise and frequency-modulated message passing, enabling more expressive multiset encodings under permutation-invariant aggregation.

\begin{theorem}[Sufficient Conditions for 1-WL-Level Expressiveness]
\label{the:1-wl}
Let $\mathcal{F}: G \rightarrow \mathbb{R}^n$ be an instance of MSH-GNN. Suppose:
\begin{itemize}
    \item Initial node features are distinguishable (e.g., injective or augmented with structural encodings);
    \item The projection $\mathbf{F}_v$ and phase shift $\boldsymbol{\phi}_v$ are generated via universal MLPs;
    \item The harmonic encoder $\psi$ uses distinct, non-zero frequencies $\{\omega_k\}$;
    \item The aggregation and readout functions are permutation-invariant and sufficiently expressive (e.g., MLP).
\end{itemize}
Then, $\mathcal{F}$ is at least as powerful as the 1-WL test in distinguishing non-isomorphic graphs.
\end{theorem}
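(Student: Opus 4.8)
The plan is to follow the standard template of Xu et al.~\cite{xu2018powerful}: a message-passing network is at least as expressive as $1$-WL provided that (i) its neighborhood aggregation is injective on multisets drawn from a countable feature domain, (ii) its combine/update step is injective in the pair $(\text{self-embedding},\text{aggregated message})$, and (iii) its graph-level readout is injective on the resulting multiset of node embeddings. Given (i)--(iii), a routine induction on the layer index $l$ shows that the node partition induced by $\{\mathbf{h}_v^{(l)}\}$ refines the coloring produced by $l$ rounds of $1$-WL; taking $L\ge|V|$ layers then yields a refinement at least as fine as the stable $1$-WL coloring, so any pair of graphs that $1$-WL separates is mapped by $\mathcal{F}$ to distinct embeddings. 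Hence it suffices to verify (i)--(iii) for a single MSH-GNN layer under the stated hypotheses; I would isolate this as a lemma (\emph{one MSH-GNN layer can realize an injective multiset update}) and then run the induction, using hypothesis~1 for the base case, since distinguishable initial features give an initial partition at least as fine as the $1$-WL initialization.

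For the layer step I would decompose the message $\Psi_{vu}=\mathbf{W}_o\,\psi(\mathbf{F}_v\mathbf{h}_u+\boldsymbol{\phi}_v)$ and show that each stage preserves the information needed for injectivity. Since the node features realizable across a fixed finite family of graphs form a bounded set, and $\mathbf{W}_f,\mathbf{b}_f,\mathbf{W}_\phi,\mathbf{b}_\phi$ are unconstrained (hypothesis~2), one may take $\mathbf{F}_v\equiv\mathbf{F}$ constant with full column rank and $\boldsymbol{\phi}_v\equiv\boldsymbol{\phi}$ constant, with norms small enough that every pre-activation $\mathbf{p}_{vu}=\mathbf{F}\mathbf{h}_u+\boldsymbol{\phi}$ lies in a box on which every coordinate map $t\mapsto\sin(\omega_k t)$ is monotone; because the frequencies are distinct and non-zero (hypothesis~3), $\psi$ is then injective on this box, hence $\mathbf{h}_u\mapsto\psi(\mathbf{F}\mathbf{h}_u+\boldsymbol{\phi})$ is injective. (The periodicity of the sinusoids, which would break global injectivity, is precisely what this boundedness-plus-rescaling step neutralizes.) Next, the $\psi$-images of distinct feature values are linearly independent for generic frequencies, so the free linear map $\mathbf{W}_o$ can be chosen to make $f:\mathbf{h}_u\mapsto\Psi_{vu}$ realize a function with the unique-multiset-sum property of Xu et al.~\cite{xu2018powerful} over neighbor multisets of bounded cardinality (equivalently, the countable-domain version of that lemma applies verbatim). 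Consequently $\mathbf{m}_v^{(l)}=\sum_{u\in\mathcal{N}(v)}\Psi_{vu}$ is an injective function of $\{\!\{\mathbf{h}_u^{(l)}\}\!\}$, which gives~(i).

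For (ii) I would exploit the residual form $\mathbf{h}_v^{(l+1)}=\mathrm{MLP}(\mathbf{h}_v^{(l)}+\mathbf{m}_v^{(l)})$: since $\mathbf{m}_v^{(l)}$ is independent of $\mathbf{h}_v^{(l)}$ under the constant-$\mathbf{F},\boldsymbol{\phi}$ choice and can be scaled small relative to the separation of admissible self-embeddings, a GIN-style $\epsilon$-type argument on the countable domain makes $(\mathbf{h}_v^{(l)},\{\!\{\mathbf{h}_u^{(l)}\}\!\})\mapsto\mathbf{h}_v^{(l)}+\mathbf{m}_v^{(l)}$ injective, after which a universal MLP (hypothesis~4) preserves injectivity. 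Condition (iii) follows from the same hypothesis, since mean-pooling composed with the learnable layer-wise combination $\mathbf{h}_G=\sum_l w_l\mathbf{h}_G^{(l)}$ is permutation-invariant and an injective readout over node-embedding multisets exists by the same countable-domain lemma, while the $1$-WL graph decision is itself a function of the stable color multiset. I expect the main obstacle to be making precise the claim that the \emph{hard-wired, periodic} nonlinearity $\psi$, followed only by a linear map, can realize an injective (indeed unique-sum) multiset encoder in the Xu et al. sense; this is exactly where the boundedness reduction and the generic-frequency linear-independence observation do the real work, and where the argument must be stated carefully rather than by analogy to ReLU-MLP universality.
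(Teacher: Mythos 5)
Your overall strategy coincides with the paper's: both reduce to the GIN template (injective multiset aggregation, injective combine, injective readout, induction over layers, with the distinguishable initial features as base case). The difference is that the paper's own proof is more schematic — it simply \emph{assumes} the receiver-conditioned message map $\mathbf{h}_u \mapsto \Psi_{vu}$ is injective and defers the multiset step to "the GIN analysis" — whereas you attempt to realize the injective encoder constructively (constant $\mathbf{F},\boldsymbol{\phi}$, norms scaled so every pre-activation lies in a monotonicity box of the sinusoids). That constructive ambition is exactly where your argument has a genuine gap.

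The problematic step is the claim that "the $\psi$-images of distinct feature values are linearly independent for generic frequencies," from which you conclude that the free linear map $\mathbf{W}_o$ can realize a unique-multiset-sum encoder. The harmonic code lives in $\mathbb{R}^{2KF}$ with $K$ and $F$ fixed by the architecture, while the number of distinct realizable neighbor features is not bounded by $2KF$ (it grows with the graphs under comparison), so linear independence is impossible in general; hence you cannot freely prescribe the values of $\mathbf{W}_o\psi(\cdot)$, and Xu et al.'s countable-domain lemma does not apply "verbatim," since there $f$ is an arbitrary (MLP-realizable) function while here it is constrained to the span of finitely many fixed sinusoids of one affine projection. Note also that per-neighbor injectivity does not survive summation: an additive collision $\psi(\mathbf{p}_1)+\psi(\mathbf{p}_2)=\psi(\mathbf{p}_3)+\psi(\mathbf{p}_4)$ among realizable projections would defeat \emph{every} choice of $\mathbf{W}_o$. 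What can rescue the step is not linear independence but genericity: for the finitely many features realizable on the finite graphs under comparison and for bounded multiset sizes, each multiset-sum collision is a nontrivial real-analytic equation in $(\mathbf{F},\boldsymbol{\phi},\{\omega_k\})$, so a generic parameter choice avoids all of them simultaneously; alternatively, one can do what the paper's appendix version does and assume the required injectivity of the aggregated message outright, at the cost of a weaker theorem. A minor further slip: you require $\mathbf{F}$ to have full column rank, which is unavailable when $F<d$; what you actually need (and what suffices for the monotone-box argument) is injectivity of a generic projection on the finite set of realizable features. Your combine step (recovering $\mathbf{h}_v$ from $\mathbf{h}_v+\mathbf{m}_v$ by a separation/scaling argument) and the readout step are fine modulo the sum-injectivity issue above.
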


\begin{remark}
Even when the assumptions above are only approximately satisfied (e.g., MLPs not perfectly injective), the harmonic projection and modulation still provide a strong inductive bias toward structure-aware neighborhood encoding. In practice, universal approximation and frequency separation ensure that MSH-GNN closely matches 1-WL behavior on real-world tasks.
\end{remark}

\subsection{Computational Complexity}

Let $F$ be the projection dimension, $d$ the feature dimension, and $K$ the number of harmonic frequencies. For each edge, the projection and harmonic modulation cost $\mathcal{O}(Fd + FK)$. Given $M$ edges and $N$ nodes, the per-layer cost is
\[
\mathcal{O}\big(M(Fd+FK) + N F d\big) = \mathcal{O}\big((M+N)Fd + MFK\big).
\]
Thus, the total cost over $L$ layers is
\[
\mathcal{O}\big(L((M+N)Fd + MFK)\big),
\]
where $K$ is typically small in practice.
All operations are parallelizable on GPUs, and for small graphs (e.g., molecular datasets), this cost is negligible relative to transformer models.

\section{Experiments}
We assess the performance of MSH-GNN across diverse graph domains and tasks to verify its generality and frequency-aware design advantages. Our experiments address both node- and graph-level predictions, covering both graphs with regular substructures (e.g., molecular and protein graphs) and large-scale real-world graphs (e.g., social networks and synthetic benchmarks).

Our evaluation covers two representative graph families:
\begin{itemize}
    \item {Molecular and protein graphs} (TU datasets; e.g., MUTAG, PTC, PROTEINS, NCI1), where recurrent biochemical motifs (rings, functional groups, folds) are prevalent;
    \item {Social and collaboration graphs} (e.g., IMDB-B, IMDB-M, RDT-B, COLLAB), where topology is diverse and node attributes are often absent or weak.
\end{itemize}

\subsection{Baselines}
We compare {MSH-GNN} with representative baselines from three lines of work:
(i) kernel-based graph classification, (ii) expressive message-passing GNNs, and
(iii) pooling/hierarchical readout models. Concretely, we include:
\begin{itemize}
    \item {Kernel-based methods}: GNTK~\cite{du2019graph} and DCNN~\cite{atwood2016diffusion}.
    \item {Message-passing GNNs}: DGCNN~\cite{zhang2018end}, GIN~\cite{xu2018powerful}, IGN~\cite{maron2018invariant}, 
    PPGNs~\cite{maron2019provably}, Natural GN~\cite{de2020natural}, GSN~\cite{bouritsas2022improving}, 
    SIN~\cite{bodnar2021weisfeiler}, CIN~\cite{wang2021weisfeiler}, PIN~\cite{truong2024weisfeiler}, and N$^2$~\cite{sun2024towards}.
    \item {Graph pooling/readout models}: GMT~\cite{baek2021accurate}, SEP~\cite{wu2022structural}, Wit-TopoPool~\cite{chen2023topological}, 
    GrePool~\cite{liu2025graph}, and RTPool~\cite{zhang2025rhomboid}.
\end{itemize}

\subsection{Graph Classification}
We follow the standard TU benchmark protocol using 10-fold cross-validation and report the mean accuracy with standard deviation. All baselines follow their original implementation settings. For datasets without node attributes (e.g., IMDB-B, IMDB-M, COLLAB), we adopt the common practice of encoding node degrees as one-hot vectors~\cite{xu2018powerful}.

\begin{table*}[t]
\centering
\resizebox{\textwidth}{!}{
\begin{tabular}{lcccccccccc}
\toprule
\multicolumn{2}{c}{\textbf{Methods}} & \textbf{MUTAG} & \textbf{PTC} & \textbf{PROTEINS} & \textbf{NCI1}  & \textbf{IMDB-B} & \textbf{IMDB-M} & \textbf{RDT-B} & \textbf{COLLAB} \\
\midrule
\multicolumn{2}{c}{\textit{\# Graphs}}  & 188 & 344 & 1113 & 4110 & 1000 & 1500 & 2000 & 5000 \\
\multicolumn{2}{c}{\textit{Avg. Nodes}} & 17.93 & 25.56 & 39.06  & 29.68 & 19.77 & 13.00 & 429.63 & 74.49 \\
\multicolumn{2}{c}{\textit{Avg. Edges}}& 19.79 & 26.96 & 72.82  & 32.13 & 96.53 & 65.94 & 497.75 & 2457.78 \\
\multicolumn{2}{c}{\textit{\# Classes}}  & 2 & 2 & 2 & 2 & 2  & 3 & 2 & 3 \\
\midrule
\multirow{2}{*}{Graph Kernels}
&GNTK~\cite{du2019graph} & 90.0$\pm$8.5 & 67.9$\pm$6.9 & 75.6$\pm$4.2 & 84.2$\pm$1.5 & 76.9$\pm$3.6 & 52.8$\pm$4.6 & N/A & 83.6$\pm$1.0 \\
&DCNN~\cite{atwood2016diffusion} & N/A & N/A & 61.3$\pm$1.6 & 56.6$\pm$1.0 & 49.1$\pm$1.4 & 33.5$\pm$1.4 & N/A & 52.1$\pm$0.7 \\
\midrule
\multirow{10}{*}{GNNs} 
&DGCNN~\cite{zhang2018end} & 85.8$\pm$1.8 & 58.6$\pm$2.2 & 75.5$\pm$0.9 & 74.4$\pm$0.5& 70.0$\pm$0.9 & 47.8$\pm$0.9 & N/A & 73.8$\pm$0.5 \\
&IGN~\cite{maron2018invariant} & 83.9$\pm$13.0 & 58.5$\pm$6.6 & 76.6$\pm$5.5 & 74.3$\pm$2.7 & 72.0$\pm$5.5 & 48.7$\pm$3.4 & N/A & 78.3$\pm$2.5 \\
&GIN~\cite{xu2018powerful} & 89.4$\pm$5.6 & 64.6$\pm$1.0 & 76.2$\pm$2.8 & 82.7$\pm$1.7 & 75.1$\pm$5.1 & 52.3$\pm$2.8 & 92.4$\pm$2.5 & 80.2$\pm$1.9 \\
&PPGNs~\cite{maron2019provably} & 90.6$\pm$8.7 & 66.2$\pm$6.6 & 77.2$\pm$4.7 & 83.2$\pm$1.1  & 73.0$\pm$5.8 & 50.5$\pm$3.6 & N/A & 81.4$\pm$1.4 \\
&Natural GN~\cite{de2020natural} & 89.4$\pm$1.6 & 66.8$\pm$1.1 & 71.7$\pm$1.0 & 82.4$\pm$1.3& 73.5$\pm$2.0 & 51.3$\pm$1.5 & N/A & N/A \\
&GSN~\cite{bouritsas2022improving} & 92.2$\pm$7.5 & 68.2$\pm$2.5 & 76.6$\pm$5.0 & 83.5$\pm$2.0  & 77.8$\pm$3.3 & 54.3$\pm$3.3 & N/A & 85.5$\pm$1.2 \\
&SIN~\cite{bodnar2021weisfeiler} & N/A & N/A & 76.4$\pm$3.3 & 82.7$\pm$2.1 & 75.6$\pm$3.2 & 52.4$\pm$2.9 & 92.2$\pm$1.0 & N/A \\
&CIN~\cite{wang2021weisfeiler} & {92.7$\pm$6.1} & {68.2$\pm$5.6} & 77.0$\pm$4.3 & 83.6$\pm$1.4 & 75.6$\pm$3.7 & 52.7$\pm$3.1 & 92.4$\pm$2.1 & N/A \\
&PIN~\cite{truong2024weisfeiler}&N/A&N/A&78.8$\pm$4.4&85.1$\pm$1.5&76.6$\pm$2.9&N/A&N/A&N/A\\
&N$^2$~\cite{sun2024towards} & N/A & N/A & 77.5$\pm$1.8 & 83.5$\pm$3.8  & 80.0$\pm$2.5 & 57.3$\pm$2.2 & N/A & 86.7$\pm$1.6 \\
\midrule
\multirow{3}{*}{Pooling} 
&GMT~\cite{baek2021accurate} & 83.4$\pm$1.3 & N/A & 75.1$\pm$0.6 & N/A  & 73.5$\pm$0.8 & 50.7$\pm$0.8 & N/A & 80.7$\pm$0.5 \\
&SEP~\cite{wu2022structural} & 85.6$\pm$1.1 & N/A & 76.4$\pm$0.4 & 78.4$\pm$0.3 & 74.1$\pm$0.6 & 51.5$\pm$0.7 & N/A & 81.3$\pm$0.2 \\
&Wit-TopoPool~\cite{chen2023topological} & 93.2$\pm$4.1 & 70.6$\pm$4.4 & 80.0$\pm$3.2  &N/A& 78.4$\pm$1.5 & 53.3$\pm$2.4 & 92.8$\pm$1.10 & N/A \\
&GrePool~\cite{liu2025graph} & 86.3$\pm$8.4 & 59.9$\pm$6.7 & N/A & 82.1$\pm$1.6 & N/A & 50.8$\pm$3.5 & N/A & 81.4$\pm$1.5 \\
&RTPool~\cite{zhang2025rhomboid}&94.7$\pm$3.3&76.6$\pm$1.1&N/A&N/A&73.0$\pm$3.8& 53.3$\pm$1.2&N/A&N/A&\\
\midrule
\multicolumn{2}{c}{\textbf{MSH-GNN}}& \textbf{99.1$\pm$0.3} &\textbf{91.4$\pm$1.5} &\textbf{94.1$\pm$3.3}&\textbf{88.6$\pm$0.5}   &\textbf{88.6$\pm$3.3}&\textbf{61.6$\pm$0.4}& \textbf{95.8$\pm$0.9} &\textbf{96.4$\pm$0.7}\\
\bottomrule
\end{tabular}}
\caption{Graph classification results on TU datasets.
Results are reported as mean accuracy (\%) $\pm$ standard deviation.
Methods are grouped into kernel-based models, message-passing GNNs,
and pooling/readout architectures.}
\label{tab:tu-results}
\end{table*}

As reported in Table~\ref{tab:tu-results}, MSH-GNN achieves consistently strong performance across all TU benchmarks and outperforms all competing baselines on most datasets.

On motif-rich molecular datasets such as MUTAG, PTC, and PROTEINS, MSH-GNN exhibits particularly pronounced gains, reaching $99.10\%$ accuracy on MUTAG and improving upon strong GNN baselines (e.g., GIN, GSN, CIN). These datasets are characterized by recurrent biochemical motifs (e.g., aromatic rings, chains, and functional groups) that induce regular and frequency-sensitive structural patterns. The observed improvements are consistent with the design of MSH-GNN, where node-conditioned harmonic projections enable selective extraction of feature components associated with such recurring substructures, beyond holistic neighbor aggregation.

On larger and structurally more diverse datasets such as NCI1 and COLLAB, MSH-GNN continues to outperform both attention-based and pooling-based methods, including GMT and SEP. This suggests that the proposed node-adaptive harmonic filtering generalizes beyond small, highly regular graphs and remains effective in heterogeneous settings with varied local connectivity patterns. Compared with topology-driven pooling approaches such as Wit-TopoPool, which rely on explicit subgraph constructions, MSH-GNN achieves comparable or better accuracy while retaining a fully differentiable, end-to-end message passing architecture with lower structural overhead.

Overall, these results demonstrate that incorporating feature-wise,
frequency-aware message construction provides a robust inductive bias
that benefits graph classification across both domain-specific and
large-scale real-world benchmarks.

\subsection{Node Classification}

To further evaluate the generality and scalability of MSH-GNN beyond small, motif-rich molecular graphs, we conduct node classification experiments on several large-scale benchmark datasets, including Coauthor Physics, Amazon Photo, Amazon Computers, and Minesweeper. These datasets exhibit diverse structural characteristics, ranging from community-driven citation networks to synthetic graphs with explicitly planted local motifs, providing a complementary testbed to assess feature-wise and frequency-aware message passing.

We follow the experimental protocol of~\cite{sun2024towards}, using public data splits and reporting the average classification accuracy over multiple random seeds. The results are summarized in Table~\ref{tab:small_scale_results}.

\begin{table*}[t]
\centering
\begin{tabular}{lcccc}
\toprule
 & \textbf{CoauthorPhy} & \textbf{AmzPhoto} & \textbf{AmzComputers} & \textbf{Minesweeper} \\
\midrule
No. nodes & 34,493 & 7,487 & 13,381 & 10,000 \\
No. edges & 247,962 & 119,043 & 245,778 & 39,402 \\
\midrule
GCN~\cite{kipf2016semi}  & 96.18$\pm$0.07 & 92.70$\pm$0.20 & 89.65$\pm$0.52 & 89.75$\pm$0.52 \\
GAT~\cite{velivckovic2017graph}  & 96.17$\pm$0.08 & 93.87$\pm$0.11 & 90.78$\pm$0.17 & 92.01$\pm$0.68 \\
GPRGNN~\cite{chien2020adaptive}  & 96.85$\pm$0.08 & 94.49$\pm$0.14 & 89.32$\pm$0.29 & 86.24$\pm$0.61 \\
APPNP~\cite{gasteiger2018predict}  & 96.54$\pm$0.07 & 94.32$\pm$0.14 & 90.18$\pm$0.17 & -- \\
\midrule
GT~\cite{shi2020masked}  & 97.05$\pm$0.05 & 94.74$\pm$0.13 & 91.18$\pm$0.17 & 91.85$\pm$0.76 \\
Graphormer~\cite{ying2021transformers} & OOM & 92.74$\pm$0.14 & OOM & OOM \\
SAN~\cite{kreuzer2021rethinking} & OOM & 94.86$\pm$0.10 & 89.83$\pm$0.16 & -- \\
GraphGPS~\cite{rampavsek2022recipe} & OOM & 95.06$\pm$0.13 & OOM & 92.29$\pm$0.61 \\
NAG\textsubscript{phormer}~\cite{chen2022nagphormer} & 97.34$\pm$0.03 & 95.49$\pm$0.11 & 91.22$\pm$0.14 & -- \\
Exphormer~\cite{shirzad2023exphormer} & 97.16$\pm$0.13 & 95.27$\pm$0.42 & 91.59$\pm$0.31 & 90.42$\pm$0.10 \\
N$^2$~\cite{sun2024towards} & 97.56$\pm$0.28 & 95.75$\pm$0.34 & 92.51$\pm$0.13 & 93.97$\pm$0.27 \\
\midrule
\textbf{MSH-GNN} & \textbf{98.37$\pm$0.08} & \textbf{97.66$\pm$0.50} & \textbf{92.84$\pm$1.61} & \textbf{94.78$\pm$0.64} \\
\bottomrule
\end{tabular}
\\
\caption{Node classification results on benchmark graphs.
Results are reported as accuracy (\%) $\pm$ standard deviation.}
\label{tab:small_scale_results}
\end{table*}

As shown in Table~\ref{tab:small_scale_results}, MSH-GNN achieves state-of-the-art performance across all evaluated datasets. Notably, on {Minesweeper}, a synthetic graph with explicitly planted local motifs, MSH-GNN outperforms the strong N$^2$ baseline by {+0.81\%}, highlighting its ability to capture localized, frequency-sensitive patterns that are challenging for conventional message passing schemes.

On real-world graphs such as {Amazon Photo} and {Coauthor Physics}, where community structure and role-based connectivity dominate, MSH-GNN consistently improves over both spectral GNNs and transformer-based models. Importantly, unlike global-attention architectures such as Graphormer and GraphGPS, which frequently encounter out-of-memory (OOM) issues on large graphs, MSH-GNN maintains a lightweight, message-passing-based design, achieving strong accuracy while remaining computationally efficient and scalable.

\begin{figure*}[htbp]
    \centering
    \includegraphics[width=1.0\linewidth]{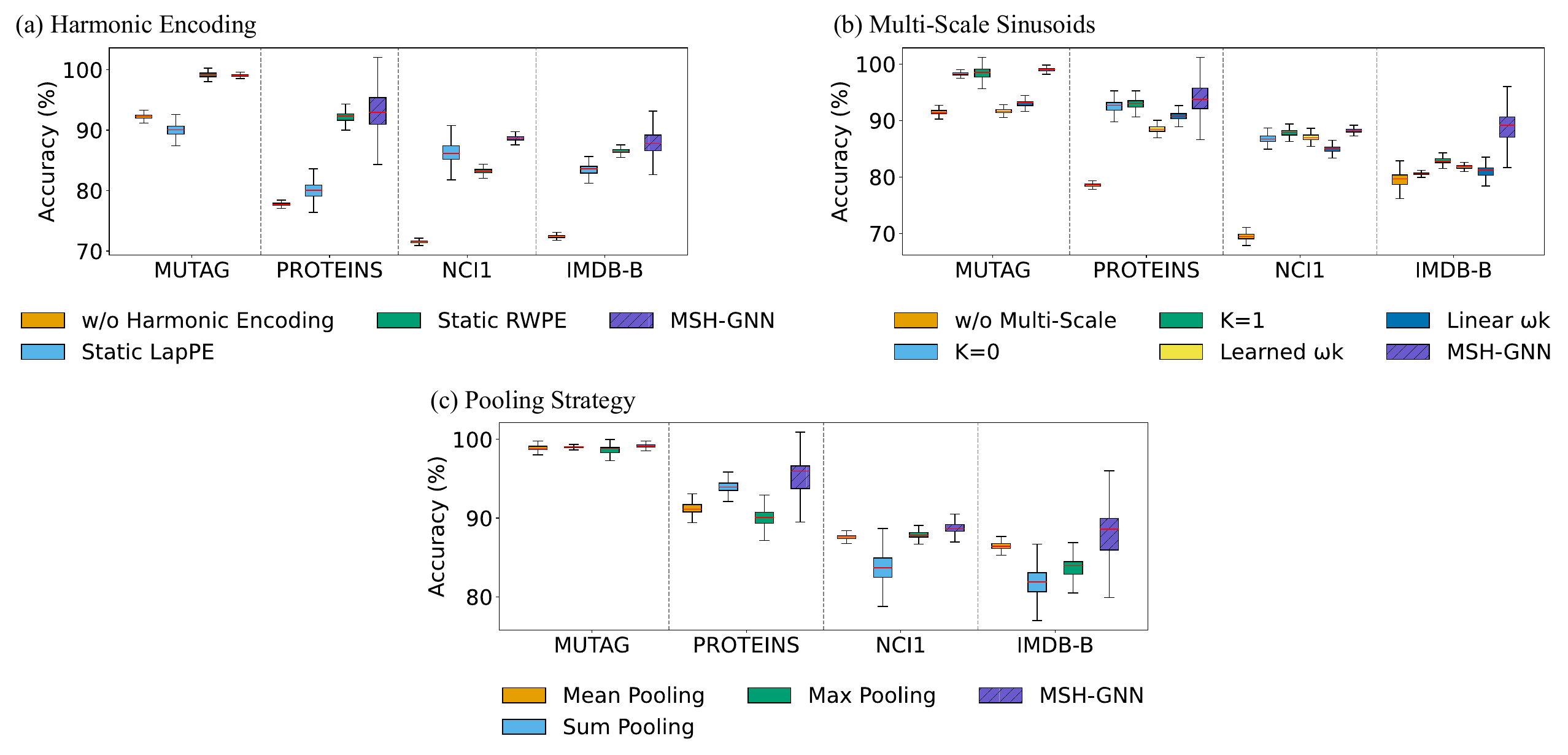}
    \caption{
Ablation results on representative TU datasets.
Panels (a)--(c) analyze the impact of harmonic encoding,
multi-scale sinusoidal modulation, and pooling strategies,
respectively, highlighting the effectiveness of each design choice in MSH-GNN.
}
    \label{fig_3:ablation}
\end{figure*}

\subsection{Ablation Study}
We conduct an ablation study to quantify the contribution of each key component in MSH-GNN, including harmonic encoding, multi-scale sinusoidal modulation, and the pooling strategy. Results on representative TU datasets are illustrated in Fig.~\ref{fig_3:ablation}, which provides distributional comparisons across different ablation settings.

\paragraph{Effect of Harmonic Encoding}
Removing the harmonic encoding module leads to a substantial performance degradation,
particularly on PROTEINS and NCI1, highlighting the importance of frequency-aware message construction.
We further compare the proposed harmonic encoding with static positional encodings,
including Laplacian positional encoding (LapPE) and random-walk-based positional encoding (RWPE).

While these spectral priors improve performance over the baseline without harmonic encoding, they remain fixed across nodes and layers, limiting their ability to adapt to local structural variations. In contrast, the proposed harmonic encoding dynamically generates node-conditioned projections prior to sinusoidal modulation, enabling frequency responses to adapt to the target node and its neighborhood context.

This adaptive design consistently outperforms static encodings across datasets, indicating that frequency information alone is insufficient without target-dependent feature alignment.

\paragraph{Effect of Multi-Scale Sinusoidal Modulation}
We further evaluate different frequency configurations, including no modulation, single-scale variants ($K=0$ or $K=1$), learned frequencies, linear schedules, and the default exponential schedule ($\omega_k = 2^k$). Across all datasets, incorporating multi-scale sinusoidal modulation consistently improves performance, with the exponential schedule achieving the best overall results.

Compared to learned or linearly spaced frequencies, the exponential schedule provides clearer separation across frequency bands, which appears beneficial for capturing both smooth and oscillatory structural patterns. These results suggest that appropriate frequency spacing plays a critical role in effective harmonic modulation, supporting robust representation learning across diverse graph structures.

\paragraph{Effect of Pooling Strategy}
We compare mean, sum, and max pooling with the proposed frequency-aware attention pooling strategy. Attention-based pooling consistently yields superior performance, particularly on NCI1 and IMDB-B, where identifying structurally informative nodes is crucial. This observation indicates that selectively aggregating harmonically informative responses at the readout stage is important for preserving discriminative structure-level information.

Overall, unlike static Laplacian-based encodings such as LapPE and RWPE, the proposed harmonic module performs node-conditioned projection prior to modulation, aligning frequency information with the local message context. This target-adaptive alignment mitigates feature misalignment caused by fixed encodings, thereby enabling flexible and structure-aware representation learning across diverse graph domains.

\begin{figure*}[htbp]
    \centering    \includegraphics[width=0.9\linewidth]{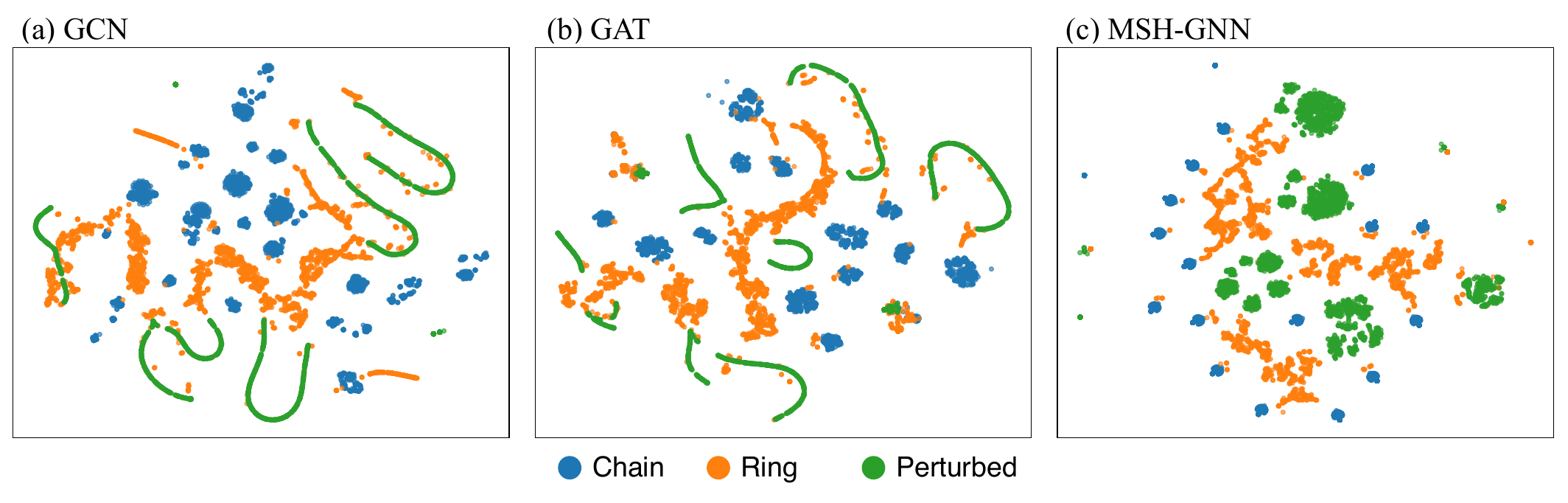}
    \caption{Graph-level embedding visualization via t-SNE. Each point represents one graph, colored by structural type (ring / chain / perturbed). MSH-GNN produces more separable clusters, indicating better structure–frequency representation.}
    \label{fig_4:tsne}
\end{figure*}

\section{Joint Frequency–Structure Classification}
To explicitly evaluate the ability of MSH-GNN to jointly model structural patterns and spectral characteristics, we design a synthetic graph classification task with 30 distinct classes. Each class corresponds to a unique combination of graph topology and graph spectral mode.

\paragraph{Graph Construction}
We generate synthetic graphs from three structural families:
\begin{itemize}
    \item {Ring graphs}: $n$-node cycle graphs with regular connectivity.
    \item {Chain graphs}: path graphs of the same size, exhibiting linear topology.
    \item {Perturbed rings}: ring graphs with $20\%$ of edges randomly rewired to introduce controlled structural noise.
\end{itemize}

\paragraph{Node Features}
For each generated graph, we compute the unnormalized graph Laplacian $L = D - A$ and extract its eigenvectors $\mathbf{u}_0, \mathbf{u}_1, \dots, \mathbf{u}_{n-1}$, ordered by increasing eigenvalues. We then assign node features using the $k$-th eigenvector $\mathbf{u}_k$, where $k \in \{0, 1, \dots, 9\}$.

Each eigenvector serves as a synthetic graph signal corresponding to a specific graph Fourier mode: smaller $k$ indices represent low-frequency (smooth) variations over the graph, while larger $k$ indices correspond to higher-frequency (oscillatory) patterns.

\paragraph{Classification Task}
Each graph is labeled by its \emph{structure--frequency pair} $(S, k)$, where $S \in \{\text{ring}, \text{chain}, \text{perturbed}\}$ denotes the structural family and $k$ indexes the selected spectral mode. This results in a 30-class classification problem (3 structures $\times$ 10 spectral modes), requiring models to simultaneously distinguish both topological and spectral variations.

\paragraph{Results}
We evaluate classification accuracy together with model complexity, inference time, and GPU memory usage. As reported in Table~\ref{tab:complexity_1}, MSH-GNN achieves substantially higher accuracy than GCN and GAT on this joint structure--frequency task, while incurring only modest additional computational overhead.

\begin{table}[htbp]
\centering
\small
\begin{tabular}{lcccc}
\toprule
\textbf{Model} & \#Params & Inference (s) & GPU (MB) & Accuracy \\
\midrule
GCN      & 6.2K & 1.40 & 17.90 & 0.5837 \\
GAT      & 6.5K & 2.50 & 17.95 & 0.5593 \\
MSH-GNN  & 7.3K & 2.74 & 26.64 & \textbf{0.7809} \\
\bottomrule
\end{tabular}
\caption{Model complexity and efficiency comparison on the synthetic classification task.}
\label{tab:complexity_1}
\end{table}

\paragraph*{Visualization}
To further examine whether MSH-GNN captures discriminative representations across both structural types and spectral modes, we perform a post-hoc visualization analysis.

As shown in Figure~\ref{fig_4:tsne}, embeddings produced by MSH-GNN form more clearly separated clusters corresponding to different structure--frequency combinations. In contrast, GCN and GAT exhibit more entangled distributions, indicating limited capacity to jointly distinguish structural and spectral variations. These observations support that the proposed harmonic encoding enhances representation learning for graphs with diverse spectral characteristics.

\section{Conclusion and Future Work}
We proposed MSH-GNN, a graph neural network architecture that integrates node-conditioned harmonic projections with multi-scale sinusoidal modulation for frequency-aware, feature-wise message passing. By incorporating spectral inductive biases directly into message construction, MSH-GNN enables target nodes to selectively filter task-relevant feature components from their neighbors, going beyond conventional GNNs that operate on holistic node embeddings.

From a theoretical perspective, we show that the proposed harmonic encoding admits a kernel-based interpretation and provides structure-adaptive, frequency-selective message functions. Under mild injectivity conditions, MSH-GNN achieves expressive power comparable to the 1-Weisfeiler--Lehman test while operating as a frequency-aware graph operator.

Empirically, MSH-GNN demonstrates consistently strong performance across molecular, social, and synthetic graph benchmarks. In particular, experiments on joint structure--frequency classification tasks highlight its ability to capture both topological patterns and spectral variations, outperforming standard GNN baselines such as GCN and GAT.

Future work includes extending the framework to equivariant or invariant settings, incorporating adaptive frequency scales, and exploring hierarchical spectral modeling at the subgraph level. Overall, MSH-GNN provides a principled and scalable approach to frequency-aware graph representation learning.

\appendices

\section{A Constructive Example Illustrating the Limitation of GAT}
\label{appendix:gat-gap}

To highlight the expressiveness advantage of MSH-GNN over attention-based GNNs, we construct a pair of graphs $G_1$ and $G_2$ that are indistinguishable by GAT but clearly separable by MSH-GNN.

Consider two star graphs shown in Figure~\ref{fig:enter-label}, each with a central node $v$ and three neighbors $u_1$, $u_2$, $u_3$. Both graphs satisfy:
\begin{itemize}
    \item The same central node feature: $\mathbf{h}_v = [0, 0]$;
    \item The same mean of neighbor features: $\frac{1}{3} (\mathbf{h}_{u_1} + \mathbf{h}_{u_2} + \mathbf{h}_{u_3}) = [0.67, 0.67]$;
    \item But distinct directional arrangement of neighbor features.
\end{itemize}

\begin{figure}[htbp]
    \centering
    \includegraphics[width=0.6\linewidth]{Inability_GAT.pdf}
    \caption{Two graphs $G_1$ and $G_2$ with identical neighbor feature mean but different feature-space composition. GAT cannot distinguish them due to scalar attention over holistic embeddings. MSH-GNN captures directional asymmetry via node-specific harmonic projections.}
    \label{fig:enter-label}
\end{figure}

\paragraph{GAT Tends to Produce Identical Aggregations for $G_1$ and $G_2$}
GAT computes scalar attention scores $\alpha_i$ based on the compatibility between $h_v$ and $h_{u_i}$, and performs the following aggregation:
\[
\mathbf{h}_v' = \sum_{i=1}^3 \alpha_i \mathbf{h}_{u_i}, \quad \sum_i \alpha_i = 1.
\]
Since the average neighbor feature is the same and $h_v = [0, 0]$, GAT produces similar outputs for $G_1$ and $G_2$. It is not explicitly designed to differentiate directional arrangements or intra-feature asymmetry.

\paragraph*{MSH-GNN Captures Directional Differences}
In contrast, MSH-GNN modulates messages using target-dependent projections and multi-scale harmonics:
\begin{align*}
\mathbf{p}_{vu} &= \mathbf{F}_v \mathbf{h}_{u} + \boldsymbol{\phi}_v, \\
\psi_k(\mathbf{p}_{vu}) &= \sin(2^k \mathbf{p}_{vu}) \,\|\, \cos(2^k \mathbf{p}_{vu}), \\
\Psi_{vu}(\mathbf{h}_v, \mathbf{h}_u) &= \mathbf{W}_o \left[ \psi_1(\mathbf{p}_{vu}) \,\|\, \psi_2(\mathbf{p}_{vu}) \,\|\, \psi_3(\mathbf{p}_{vu}) \right].
\end{align*}

Even if neighbor averages are equal, the node-specific projections $\mathbf{F}_v \mathbf{h}_u$ preserve directional information, and harmonic modulation accentuates these differences at multiple frequency scales. As a result, $\Psi_{vu}$ differs between $G_1$ and $G_2$ despite equal mean inputs, enabling MSH-GNN to distinguish the graphs. As a result, MSH-GNN can produce distinct messages for $G_1$ and $G_2$ via:
\begin{itemize}
    \item Direction-sensitive basis projection;
    \item Phase-aware modulation;
    \item Nonlinear frequency filtering that breaks permutation symmetry in feature space.
\end{itemize}

This example illustrates the failure of GAT under structural aliasing and the power of MSH-GNN to overcome it via feature-wise, frequency-aware message construction.

\section{Theoretical Justification}
\label{appendix:kernel-approximation}

\subsection{Harmonic Modulation as a Kernel Approximation}

We provide an auxiliary interpretation of the harmonic modulation in MSH-GNN
as a structured Fourier-feature map, which induces a kernel-like similarity
over \emph{node-conditioned projected features}. This perspective helps explain
why the proposed message function can capture rich nonlinear interactions in a
localized and target-dependent manner.

\subsubsection{Bochner's Theorem and Fourier Feature Maps}

Bochner's theorem states that a continuous, shift-invariant kernel
$k(x,x') = k(x-x')$ on $\mathbb{R}^d$ is positive definite if and only if it is
the Fourier transform of a non-negative finite measure $\mu$:
\begin{equation}
k(x-x') = \int_{\mathbb{R}^d} e^{i \eta^\top (x-x')} \, d\mu(\eta).
\end{equation}
When $\mu$ admits a density $p(\eta)$, a standard construction yields Fourier
feature maps (random or deterministic) whose inner products approximate
$k(x-x')$ \cite{rahimi2007random}.

\subsubsection{MSH-GNN as Deterministic Multi-Frequency Fourier Features}

Recall from Section~\ref{sec:methods} that, at layer $l$, each target node $v$
generates a projection matrix $\mathbf{F}_v \in \mathbb{R}^{F\times d}$ and a
phase vector $\boldsymbol{\phi}_v \in \mathbb{R}^{F}$. For each directed edge
$(u\!\rightarrow\! v)$, MSH-GNN forms a receiver-conditioned projection
\begin{equation}
\mathbf{p}_{vu} = \mathbf{F}_v \mathbf{h}_u + \boldsymbol{\phi}_v \in \mathbb{R}^{F},
\end{equation}
and applies a multi-frequency sinusoidal map
\begin{equation}
\psi(\mathbf{p}_{vu})
= \big\|_{k=0}^{K}
\left[\sin(\omega_k \mathbf{p}_{vu}) \,\|\, \cos(\omega_k \mathbf{p}_{vu})\right],
\qquad \omega_k > 0.
\end{equation}
In our implementation, $\{\omega_k\}$ follows an exponential schedule (e.g.,
$\omega_k = 2^{k}$), producing a compact multi-band feature expansion.

This construction can be viewed as a \emph{deterministic Fourier-feature map}
defined on node-conditioned projected inputs. Consequently, for a fixed target
node $v$, the inner product
\begin{equation}
\psi(\mathbf{p}_{vu})^\top \psi(\mathbf{p}_{vu'})
\end{equation}
induces a \emph{target-dependent} similarity function over neighbors
$\mathbf{h}_u$ and $\mathbf{h}_{u'}$. This motivates defining an implicit kernel
\begin{equation}
k_v(\mathbf{h}_u,\mathbf{h}_{u'})
:= \psi(\mathbf{F}_v\mathbf{h}_u+\boldsymbol{\phi}_v)^\top
   \psi(\mathbf{F}_v\mathbf{h}_{u'}+\boldsymbol{\phi}_v),
\end{equation}
which resembles a shift-invariant kernel \emph{in the projected feature space}
(up to the choice of frequencies and finite truncation). Unlike globally sampled
random Fourier features, the map here is \emph{receiver-conditioned} through
$\mathbf{F}_v$ and $\boldsymbol{\phi}_v$, enabling localized and adaptive
nonlinear filtering.

\begin{remark}
The kernel interpretation above is not used as an algorithmic component;
rather, it provides an explanatory lens for understanding the harmonic module
as an inner-product feature map in a high-dimensional sinusoidal space.
\end{remark}

\subsection{Spectral Modulation and Frequency Sensitivity}

We next provide an intuitive connection between the harmonic module and
frequency sensitivity of graph signals. Classical message passing layers (e.g.,
GCN/GIN) apply shared transformations that do not explicitly separate responses
across different spectral modes, while attention mechanisms primarily reweight
neighbors. In contrast, MSH-GNN applies \emph{multi-frequency} nonlinear maps on
receiver-conditioned projections, which encourages distinct responses to input
signals of different spectral variability.

Concretely, if an input signal varies slowly across the graph (low-frequency),
then $\mathbf{p}_{vu}$ tends to vary smoothly and low $\omega_k$ components
provide stable encodings. If an input signal exhibits rapid local variations
(high-frequency), higher $\omega_k$ components produce more oscillatory
responses. This multi-band behavior acts as a lightweight filter bank in the
learned feature space and complements the node-conditioned projection mechanism.

\begin{remark}
Here ``frequency'' is used in the sense of variability of node signals over the
graph. MSH-GNN does not parameterize filters directly on the Laplacian
eigenbasis; instead, it uses multi-frequency sinusoidal mappings on projected
features, which empirically yields frequency-sensitive representations.
\end{remark}

\subsection{Injective Message Functions Imply 1-WL-Level Expressiveness}
\label{appendix:proof}

We provide a stricter (sufficient) condition under which MSH-GNN matches the
discriminative power of the 1-Weisfeiler--Lehman (1-WL) test, following the
injectivity framework of GIN \cite{xu2018powerful}.

\begin{theorem}[Sufficient Conditions for 1-WL-Level Expressiveness]
Let $\mathcal{F}: G \rightarrow \mathbb{R}^n$ be an instance of MSH-GNN. Assume:
\begin{itemize}
    \item Initial node features are injective (or augmented with unique
    identifiers / sufficiently informative structural encodings);
    \item $\mathbf{F}_v$ and $\boldsymbol{\phi}_v$ are generated from $\mathbf{h}_v$
    by sufficiently expressive MLPs, and the resulting receiver-conditioned
    message map is injective in $\mathbf{h}_u$ conditioned on $\mathbf{h}_v$;
    \item The harmonic map uses distinct non-zero frequencies $\{\omega_k\}$ and
    the post-harmonic projection $\mathbf{W}_o$ does not collapse different
    sinusoidal codes;
    \item The neighborhood aggregation and node update are permutation-invariant
    and sufficiently expressive (e.g., sum/mean aggregation followed by an MLP).
\end{itemize}
Then, $\mathcal{F}$ is at least as powerful as the 1-WL test in distinguishing
non-isomorphic graphs.
\end{theorem}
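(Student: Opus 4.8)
The plan is to reduce the statement to the standard injectivity criterion for 1-WL-equivalence from GIN \cite{xu2018powerful}: a message-passing network of the form ``aggregate neighbor contributions, then combine with the node's own state'' is at least as powerful as 1-WL whenever, at every layer, the update $(\mathbf{h}_v^{(l)}, \{\!\{\mathbf{h}_u^{(l)} : u\in\mathcal{N}(v)\}\!\}) \mapsto \mathbf{h}_v^{(l+1)}$ is injective and the graph readout is injective on the multiset of final node states. I would fix the finite family of graphs on which we want to match 1-WL and run an induction on the layer index, with hypothesis ``$\mathbf{h}_v^{(l)}=\mathbf{h}_w^{(l)}$ exactly when $v$ and $w$ receive the same 1-WL color after $l$ rounds.'' The key enabling observation is that 1-WL stabilizes in finitely many rounds with finitely many colors, so every node state, every projected feature $\mathbf{p}_{vu}$, and every message $\Psi_{vu}$ occurring in the run ranges over a \emph{finite} set; this is what lets us use the (periodic, hence not globally injective) harmonic map.

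\textbf{Layer injectivity.} The base case is the hypothesis that initial features are injective (or augmented with distinguishing structural encodings). For the inductive step I would choose layer weights as follows. (i) Since $\mathbf{F}_v$ and $\boldsymbol{\phi}_v$ are universal MLPs of $\mathbf{h}_v$ and only finitely many values of $\mathbf{h}_v$ occur, pick this MLP so that for every occurring $\mathbf{h}_v$ the affine map $\mathbf{h}_u \mapsto \mathbf{F}_v\mathbf{h}_u+\boldsymbol{\phi}_v$ separates the finitely many neighbor features and sends them into the cube $(-\pi,\pi)^F$. (ii) On that cube $p\mapsto(\sin p,\cos p)$ is a homeomorphism onto its image, so $\psi$ --- which contains the $\omega_0=1$ band, the remaining distinct non-zero frequencies only adding further, non-redundant coordinates --- is injective on the occurring $\mathbf{p}_{vu}$. (iii) With hidden width taken at least the number of 1-WL colors, a generic $\mathbf{W}_o$ keeps these encodings distinct and linearly independent across neighbor colors for each fixed $\mathbf{h}_v$, while a reserved coordinate block carries an $\mathbf{h}_v$-dependent tag; hence $\mathbf{m}_v^{(l)}=\sum_{u\in\mathcal{N}(v)}\Psi_{vu}$ is an injective function of the pair (own color, neighbor multiset), since linear independence makes the summed vector determine the color multiplicities, exactly as in the multiset lemma of \cite{xu2018powerful}. (iv) $\mathbf{h}_v^{(l+1)}=\mathrm{MLP}(\mathbf{h}_v^{(l)}+\mathbf{m}_v^{(l)})$ with a universal $\mathrm{MLP}$ and a generic realization of the residual sum remains injective on the finitely many occurring configurations. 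By the GIN expressiveness theorem of \cite{xu2018powerful}, once $L$ is at least the number of 1-WL refinement rounds, the induced node partition coincides with the 1-WL partition (and, being a message-passing scheme, cannot be finer).

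\textbf{Graph-level readout.} The frequency-aware pooling is a permutation-invariant sum over nodes of learnable transforms of the modulated messages, followed by the learnable layerwise combination $\mathbf{h}_G=\sum_l w_l\mathbf{h}_G^{(l)}$. Restricting again to the finite graph family, the same multiset lemma supplies a choice of these linear maps for which the pooled vector injectively encodes the histogram of final node colors at each depth; using a sum node-aggregation (or, to match the standard 1-WL graph test exactly, augmenting the pooled vector with $|V|$ so a mean becomes invertible) then guarantees that MSH-GNN separates every pair of graphs that 1-WL separates, which is the claim.

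\textbf{Anticipated main obstacle.} The one genuine departure from the textbook GIN argument is the periodicity of $\sin$ and $\cos$: $\psi$ is \emph{not} globally injective, so it cannot by itself serve as GIN's injective feature map. The fix is structural to MSH-GNN --- the projection $(\mathbf{F}_v,\boldsymbol{\phi}_v)$ is learned from $\mathbf{h}_v$, hence can be scaled to confine all projected values to a single period of the lowest-frequency band --- but it is only legitimate because the set of occurring values is finite, which is precisely why the theorem fixes a finite collection of non-isomorphic graphs and assumes universal MLPs for $\mathbf{F}_v,\boldsymbol{\phi}_v$ together with distinct non-zero frequencies. The remaining delicate point, joint injectivity of the residual combination $\mathbf{h}_v^{(l)}+\mathbf{m}_v^{(l)}$ in both arguments, is more routine: it is handled by reserving a coordinate block of $\mathbf{W}_o$ for an $\mathbf{h}_v$-tag (equivalently, by a generic-perturbation argument over the finite set of arising pairs), after which the universal update MLP completes the induction.
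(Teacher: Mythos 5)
Your proposal is correct and follows essentially the same route as the paper's proof: reduce to the GIN-style injectivity criterion, observe that for a fixed $\mathbf{h}_v$ an injective receiver-conditioned message map sends distinct neighbor multisets to distinct message multisets, and let permutation-invariant, sufficiently expressive aggregation and update reproduce the 1-WL refinement layer by layer. The difference is one of detail rather than strategy: where the paper simply invokes its injectivity assumptions, you construct them --- confining the node-conditioned projections $\mathbf{F}_v\mathbf{h}_u+\boldsymbol{\phi}_v$ to a single period to defuse the non-injectivity of $\sin$/$\cos$, exploiting finiteness and linear independence for multiset injectivity of the summed messages, and handling the residual update and graph-level readout explicitly --- which elaborates and strengthens, but does not change, the argument.
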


\begin{proof}
Consider the edge message in MSH-GNN:
\begin{equation}
\begin{split}
\Psi_{vu}(\mathbf{h}_v,\mathbf{h}_u)
&= \mathbf{W}_o \Big[ \big\|_{k=0}^{K}
    \big(\sin(\omega_k\mathbf{p}_{vu}) \,\|\, \cos(\omega_k\mathbf{p}_{vu})\big) \Big], \\
\mathbf{p}_{vu}
&= \mathbf{F}_v\mathbf{h}_u + \boldsymbol{\phi}_v.
\end{split}
\end{equation}

By assumption, for a fixed $v$, the composed map
$\mathbf{h}_u \mapsto \Psi_{vu}(\mathbf{h}_v,\mathbf{h}_u)$ is injective.
Therefore, different multisets of neighbor features induce different multisets
of messages $\{\Psi_{vu}\}_{u\in \mathcal{N}(v)}$.

Since the aggregation is permutation-invariant and the subsequent node update
is sufficiently expressive (as in the GIN analysis \cite{xu2018powerful}),
the updated node embedding $\mathbf{h}_v^{(l+1)}$ can be made to be an injective
function of the multiset of neighbor features, matching the refinement step of
1-WL. Repeating this argument across layers yields 1-WL-level distinguishability.
\end{proof}

\begin{remark}
The conditions above are sufficient and intentionally conservative. In practice,
exact injectivity is not required for strong empirical performance; the
receiver-conditioned projection and multi-frequency harmonic expansion provide a
useful inductive bias for encoding neighborhood multisets in a frequency-aware,
feature-wise manner.
\end{remark}

\noindent {Summary}
The harmonic module admits a kernel-style interpretation as an inner-product
feature map on node-conditioned projections, and the resulting message function
can satisfy the standard injectivity conditions that imply 1-WL-level
expressiveness.

\section{Supplementary Experimental Settings}
\label{appendix:experiments}

\subsection{Node Classification Benchmarks}
\label{appendix:node}

To complement the graph-level evaluations in the main text, we additionally assess \textbf{MSH-GNN} on standard node classification benchmarks, including \textit{Coauthor Physics}, \textit{Amazon Computers}, \textit{Amazon Photo}, and \textit{Minesweeper}. These datasets span co-authorship networks, e-commerce graphs, and synthetic graphs with varying structural regularities.

All experiments follow the training protocol of N$^2$~\cite{sun2024towards}, using publicly available data splits and reporting the average classification accuracy over multiple random seeds.

\paragraph{Baselines}
We compare MSH-GNN against representative message-passing GNNs and graph transformer models:

\begin{itemize}
    \item \textbf{Message Passing GNNs}: GCN~\cite{kipf2016semi}, GAT~\cite{velivckovic2017graph}, APPNP~\cite{gasteiger2018predict}, GPR-GNN~\cite{chien2020adaptive};
    \item \textbf{Graph Transformers}: GT~\cite{shi2020masked}, Graphormer~\cite{ying2021transformers}, SAN~\cite{kreuzer2021rethinking}, GraphGPS~\cite{rampavsek2022recipe};
    \item \textbf{Spectral-aware Transformers}: NAG\textsubscript{phormer}~\cite{chen2022nagphormer}, Exphormer~\cite{shirzad2023exphormer}, and N$^2$~\cite{sun2024towards}.
\end{itemize}

\subsection{Hyperparameter Configuration}

We tune hyperparameters via grid search for each dataset. The final configurations are summarized in Table~\ref{tab:hyperparams}. Unless otherwise specified, all models are trained using the Adam optimizer with a learning rate of $0.001$ and dropout rate $0.1$. The hidden and output dimensions are set to $64$ and $16$, respectively.

\begin{table}[htbp]
\centering
\small
\begin{tabular}{lcccc}
\toprule
\textbf{Dataset} & \#Layers & Hidden$_1$ & Hidden$_2$ & Dropout \\
\midrule
TU datasets      & 3  & 64 & 16 & 0.1 \\
Coauthor Physics & 3  & 64 & 16 & 0.1 \\
Amazon Computers & 3  & 64 & 16 & 0.1 \\
Amazon Photo     & 3  & 64 & 16 & 0.1 \\
Minesweeper      & 15 & 64 & 16 & 0.1 \\
\bottomrule
\end{tabular}
\caption{Hyperparameter configurations used in the experiments.}
\label{tab:hyperparams}
\end{table}

\bibliographystyle{IEEEtran}
\bibliography{references}

\end{document}